\newtheorem{theorem}{Theorem}
\title{Fast Low-rank Metric Learning for Large-scale and High-dimensional Data}
\author{%
  Han Liu \textsuperscript{\dag}, ~ Zhizhong Han\textsuperscript{\ddag}, ~ Yu-Shen Liu\textsuperscript{\dag} \thanks{Corresponding author: Yu-Shen Liu.} ~ , ~ Ming Gu\textsuperscript{\dag} \\
  \dag ~ School of Software, Tsinghua University, Beijing, China \\
  \quad BNRist \& KLISS, Beijing, China\\
  \ddag ~ Department of Computer Science, University of Maryland, College Park, USA\\
  \texttt{liuhan15@mails.tsinghua.edu.cn \quad h312h@umd.edu}\\
  \texttt{liuyushen@tsinghua.edu.cn \quad guming@tsinghua.edu.cn}\\
}
\begin{document}

\maketitle

\begin{abstract}
Low-rank metric learning aims to learn better discrimination of data subject to low-rank constraints. It keeps the intrinsic low-rank structure of datasets and reduces the time cost and memory usage in metric learning.
However, it is still a challenge for current methods to handle datasets with both high dimensions and large numbers of samples.
To address this issue, we present a novel fast low-rank metric learning (FLRML) method.
FLRML casts the low-rank metric learning problem into an unconstrained optimization on the Stiefel manifold, which can be efficiently solved by searching along the descent curves of the manifold.
FLRML significantly reduces the complexity and memory usage in optimization,
which makes the method scalable to both high dimensions and large numbers of samples.
Furthermore, we introduce a mini-batch version of FLRML to 
make the method scalable to larger datasets which are hard to be loaded and decomposed in limited memory.
The outperforming experimental results show that our method is with high accuracy and much faster than the state-of-the-art methods under several benchmarks with large numbers of high-dimensional data.
Code has been made available at \href{https://github.com/highan911/FLRML}{https://github.com/highan911/FLRML}.
\end{abstract}

\section{Introduction}
\label{intro}

Metric learning aims to learn a distance (or similarity) metric from supervised or semi-supervised information, which provides better discrimination between samples.
Metric learning has been widely used in various area, such as dimensionality reduction \cite{liu2015low,mu2016fixed,Harandi2017Joint}, robust feature extraction \cite{Ding2017Robust,luo2018matrix} and information retrieval \cite{chechik2009online,shalit2012online}.
For existing metric learning methods, the huge time cost and memory usage are major challenges when dealing with high-dimensional datasets with large numbers of samples.
To resolve this issue, low-rank metric learning (LRML) methods optimize a metric matrix subject to low-rank constraints. These methods tend to keep the intrinsic low-rank structure of the dataset, and also, reduce the time cost and memory usage in the learning process.
%
%
%
%
%
%
Reducing the matrix size in optimization is an important idea to reduce time and memory usage.
%
However, the size of the matrix to be optimized still increases linearly or squarely with either the dimensions, the number of samples, or the number of pairwise/triplet constraints.
As a result, it is still a research challenge when dealing with the metric learning task on datasets with both high dimensions and large numbers of samples \cite{bellet2013survey,wang2015survey}.

To address this issue, we present a Fast Low-Rank Metric Learning (FLRML).
In contrast to state-of-the-art methods, FLRML introduces a novel formulation to better employ the low rank constraints to further reduce the complexity, the size of involved matrices, which enables FLRML to achieve high accuracy and faster speed on large numbers of high-dimensional data.
Our main contributions are listed as follows. \\
  - Modeling the constrained metric learning problem as an unconstrained optimization that can be efficiently solved on the Stiefel manifold, which makes our method scalable to large numbers of samples and constraints.\\
  - Reducing the matrix size and complexity in optimization as much as possible while ensuring the accuracy, which makes our method scalable to both large numbers of samples and high dimensions. \\
  - Furthermore, a mini-batch version of FLRML is proposed to make the method scalable to larger datasets which are hard to be fully loaded in memory.

\section{Related Work}
\label{rel}

In metric learning tasks, the training dataset can be represented as a matrix $\mathbf{X} = [ \mathbf{x}_1, ... , \mathbf{x}_n ] \in \mathbb{R}^{D \times n}$, where $n$ is the number of training samples and each sample $\mathbf{x}_i$ is with $D$ dimensions.
Metric learning methods aim to learn a metric matrix $\mathbf{M} \in \mathbb{R}^{D \times D}$ from the training set in order to obtain better discrimination between samples.
Some low-rank metric learning (LRML) methods have been proposed to obtain the robust metric of data, and to reduce the computational costs for high-dimensional metric learning tasks.
%
%
Since the optimization with fixed low-rank constraint is nonconvex, the naive gradient descent methods are easy to fall into bad local optimal solutions \cite{mu2016fixed,wen2013feasible}.
In terms of different strategies to remedy this issue, the existing LRML methods can be roughly divided into the following two categories.

One type of method \cite{liu2015low,schultz2004learning,kwok2003learning,Hegde2015NuMax,mason2017learning} introduces the low-rankness encouraging norms (such as nuclear norm) as regularization, which relaxes the nonconvex low-rank constrained problems to convex problems.
The two disadvantages of such methods are:
(1) the norm regularization can only encourage the low-rankness, but cannot limit the upper bound of rank;
(2) the matrix to be optimized is still the size of either $D^2$ or $n^2$.

Another type of method
\cite{mu2016fixed,Harandi2017Joint,cheng2013riemannian,huang2015projection,shukla2015stiefel,zhang2017efficient}
considers the low-rank constrained space as Riemannian manifold.
This type of method can obtain high-quality solutions of the nonconvex low-rank constrained problems.
However, for these methods, the matrices to be optimized are at least a linear size of either $D$ or $n$.
The performance of these methods is still suffering on large-scale and high-dimensional datasets.


Besides low-rank metric learning methods, there are some other types of methods for speeding up metric learning on large and high-dimensional datasets.
Online metric leaning \cite{chechik2009online,shalit2012online,gillen2018online,li2018opml} randomly takes one sample at each time.
Sparse metric leaning \cite{qi2009efficient,bellet2012similarity,Shi2014Sparse,liu2015similarity,ALT:2016:Zhang,liu2010large} represents the metric matrix as the sparse combination of some pre-generated rank-1 bases.
Non-iterative metric leaning \cite{koestinger2012kissme,zhu2018towards} avoids iterative calculation by providing explicit optimal solutions.
In the experiments, some state-of-the-art methods of these types will also be included for comparison.
Compared with these methods, our method also has advantage in time and memory usage on large-scale and high-dimensional datasets.
A literature review of many available metric learning methods is beyond the scope of this paper. The reader may consult Refs. \cite{bellet2013survey,wang2015survey,kulis2013metric} for detailed expositions.

\section{Fast Low-Rank Metric Learning (FLRML)}
\label{alg}

The metric matrix $\mathbf{M}$ is usually semidefinite, which guarantees the non-negative distances and non-negative self-similarities. A semidefinite $\mathbf{M}$ can be represented as the transpose multiplication of two identical matrices,
$\mathbf{M = L^\top  L}$,
where $\mathbf{L} \in \mathbb{R}^{d \times D}$ is a row-full-rank matrix, and ${\rm rank}(\mathbf{M}) = d$.
Using the matrix $\mathbf{L}$ as a linear transformation, the training set $\mathbf{X}$ can be mapped into $\mathbf{Y} \in \mathbb{R}^{d \times n}$, which is denoted by
$ \mathbf{Y = LX} $. Each column vector $\mathbf{y}_i$ in $\mathbf{Y} $ is the corresponding $d$-dimensional vector of the column vector $\mathbf{x}_i$ in $\mathbf{X}$.

In this paper, we present a fast low-rank metric learning (FLRML) method, which typically learns the low-rank cosine similarity metric from triplet constraints $\mathcal{T}$.
The cosine similarity between a pair of vectors $(\mathbf{x}_i, \mathbf{x}_j)$ is measured by their corresponding low dimensional vector $(\mathbf{y}_i, \mathbf{y}_j)$ as $ sim(\mathbf{x}_i, \mathbf{x}_j) = \frac{\mathbf{y}_i^\top \mathbf{y}_j}{ ||\mathbf{y}_i|| ~ ||\mathbf{y}_j|| }$.
Each constraint $\{i,j,k\} \in \mathcal{T}$ refers to the comparison of a pair of similarities
$sim(\mathbf{x}_i, \mathbf{x}_j) > sim(\mathbf{x}_i, \mathbf{x}_k)$.

To solve the problem of metric learning on large-scale and high-dimensional datasets, our motivation is to reduce matrix size and complexity in optimization as much as possible while ensuring the accuracy.
To address this issue, our idea is to embed the triplet constraints into a matrix $\mathbf{K}$, so that the constrained metric learning problem can be casted into an unconstrained optimization in the ``form'' of ${\rm tr}(\mathbf{W K})$, where $\mathbf{W}$ is a low-rank semidefinite matrix to be optimized (Section \ref{alg:objective}).
By reducing the sizes of $\mathbf{W}$ and $\mathbf{K}$ to $\mathbb{R}^{r \times r}$ ($r = {\rm rank}(\mathbf{X})$), the complexity and memory usage are greatly reduced.
An unconstrained optimization in this form can be efficiently solved on the Stiefel manifold (Section \ref{alg:stiefel}).
In addition, a mini-batch version of FLRML is proposed, which makes our method scalable to larger datasets that are hard to be fully loaded and decomposed in the memory (Section \ref{alg:mini}).

\subsection{Forming the Objective Function}
\label{alg:objective}

Using margin loss, each triplet $t = \{ i,j,k \}$ in $\mathcal{T}$ corresponds to a loss function
$ l(\{ i,j,k \}) = {\rm max}(0, m - sim(\mathbf{x}_i, \mathbf{x}_j) + sim(\mathbf{x}_i, \mathbf{x}_k) )$.
A naive idea is to sum the loss functions, but when $n$ and $|\mathcal{T}|$ are very large, the evaluation of loss functions will be time consuming.
Our idea is to embed the evaluation of loss functions into matrices to speed up their calculation.

For each triplet $t = \{ i,j,k \}$ in $\mathcal{T}$, a matrix $\mathbf{C}^{(t)}$ with the size $n \times n$ is generated, which is a sparse matrix with $c^{(t)}_{j i} = 1$ and $c^{(t)}_{k i} = -1$.
The summation of all $\mathbf{C}^{(t)}$ matrices is represented as
$\mathbf{C} = \sum_{t \in \mathcal{T}} \mathbf{C}^{(t)}$.
The matrix $\mathbf{YC}$ is with the size $\mathbb{R}^{d \times n}$.
Let $\mathcal{T}_i$ be the subset of triplets with $i$ as the first item, and $\mathbf{\tilde{y}}_i$ be the $i$-th column of $\mathbf{YC}$, then $\mathbf{\tilde{y}}_i$ can be written as
$\mathbf{\tilde{y}}_i = \sum_{\{i,j,k\} \in \mathcal{T}_i} (- \mathbf{y}_j + \mathbf{y}_k) $.
This is the sum of negative samples minus the sum of positive samples for the set $\mathcal{T}_i$.
By multiplying on $\frac{1}{|\mathcal{T}_i| + 1}$ on both sides, we can get $\frac{1}{|\mathcal{T}_i| + 1} \mathbf{\tilde{y}}_i $ as the mean of negative samples minus the mean of positive samples (in which ``$|\mathcal{T}_i| + 1$'' is to avoid zero on the denominator).

Let $ z_i = \frac{1}{|\mathcal{T}_i| + 1}  \mathbf{y}_i^\top \mathbf{\tilde{y}}_i$, then
by minimizing $z_i$, the vector $\mathbf{y}_i$ tends to be closer to the positive samples than the negative samples.
Let $\mathbf{T}$ be a diagonal matrix with $T_{ii} = \frac{1}{|\mathcal{T}_i| + 1}$, then $ z_i $ is the $i$-th diagonal element of $- \mathbf{Y}^\top \mathbf{Y} \mathbf{C} \mathbf{T}$.
The loss function can be constructed by putting $z_i$ into the margin loss as
$ L(\mathcal{T}) = \sum_{i=1}^n {\rm max}(0, z_i + m)$.
A binary function $\lambda(x)$ is defined as: if $x > 0$, then $\lambda(x) = 1$; otherwise, $\lambda(x) = 0$.
By introducing the function $\lambda(x)$, the loss function $L(\mathcal{T})$ can be written as
$L(\mathcal{T}) = \sum_{i=1}^n (z_i + m) \lambda(z_i + m)$,
which can be further represented in matrices:
\begin{equation}
\label{eq:l0}
 L(\mathcal{T}) = - {\rm tr}( \mathbf{Y}^\top \mathbf{Y} \mathbf{C} \mathbf{T} \mathbf{\Lambda})
 + M(\mathbf{\Lambda}),
 \end{equation}
where $\mathbf{\Lambda}$ is a diagonal matrix with $\Lambda_{ii} = \lambda(z_i + m) $,
and $ M(\mathbf{\Lambda}) = m \sum_{i=1}^n \Lambda_{ii}$ is the sum of constant terms in the margin loss.

In Eq.(\ref{eq:l0}), $\mathbf{Y} \in \mathbb{R}^{d \times n}$ is the optimization variable.
For any value of $\mathbf{Y}$, the corresponding value of $\mathbf{L}$ can be obtained by solving the linear equation group $\mathbf{Y = LX}$.
A minimum norm least squares solution of $\mathbf{Y = LX}$ is $\mathbf{L = Y V \Sigma^{-1} U^\top}$, where $[\mathbf{U}\in \mathbb{R}^{D \times r}, \mathbf{\Sigma}\in \mathbb{R}^{r \times r}, \mathbf{V}\in \mathbb{R}^{n \times r}]$ is the SVD of $\mathbf{X}$.
Based on this, the size of the optimization variable can be reduced from $\mathbb{R}^{d \times n}$ to $\mathbb{R}^{d \times r}$, as shown in Theorem \ref{th:subset}.

\begin{theorem}
\label{th:subset}
$\{ \mathbf{Y} : \mathbf{Y = B V^\top}, \mathbf{B} \in \mathbb{R}^{d \times r} \}$ is a subset of $\mathbf{Y}$ that covers all the possible minimum norm least squares solutions of $\mathbf{L}$.
\end{theorem}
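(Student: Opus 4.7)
The plan is to exploit the SVD $\mathbf{X} = \mathbf{U}\mathbf{\Sigma}\mathbf{V}^\top$ to show that the minimum norm least squares procedure, viewed as a map $\mathbf{Y} \mapsto \mathbf{L} \mapsto \mathbf{L}\mathbf{X}$, is a projection onto the candidate subset $\{\mathbf{B}\mathbf{V}^\top : \mathbf{B}\in\mathbb{R}^{d\times r}\}$. Once this is established, every attainable $\mathbf{Y}$ in the optimization necessarily lies in this subset, and conversely every $\mathbf{B}\mathbf{V}^\top$ is realizable by some $\mathbf{L}$, so nothing is lost by restricting the optimization variable to $\mathbf{B}$.

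First, I would start from an arbitrary candidate $\mathbf{Y}\in\mathbb{R}^{d\times n}$ and plug it into the stated minimum norm least squares formula $\mathbf{L} = \mathbf{Y}\mathbf{V}\mathbf{\Sigma}^{-1}\mathbf{U}^\top$. The $\mathbf{Y}$ that actually corresponds to this $\mathbf{L}$ in the relation $\mathbf{Y}=\mathbf{L}\mathbf{X}$ is obtained by remultiplying with $\mathbf{X}$. Using $\mathbf{X} = \mathbf{U}\mathbf{\Sigma}\mathbf{V}^\top$ and $\mathbf{U}^\top\mathbf{U} = \mathbf{I}_r$, a direct calculation gives $\mathbf{L}\mathbf{X} = \mathbf{Y}\mathbf{V}\mathbf{\Sigma}^{-1}\mathbf{U}^\top\mathbf{U}\mathbf{\Sigma}\mathbf{V}^\top = \mathbf{Y}\mathbf{V}\mathbf{V}^\top$. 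Setting $\mathbf{B} := \mathbf{Y}\mathbf{V}\in\mathbb{R}^{d\times r}$, the realized $\mathbf{Y}$ is precisely $\mathbf{B}\mathbf{V}^\top$, which is in the claimed subset.

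Next, I would address the converse direction of the ``covers all'' wording. For any $\mathbf{B}\in\mathbb{R}^{d\times r}$, the matrix $\mathbf{Y} = \mathbf{B}\mathbf{V}^\top$ is itself attainable: taking $\mathbf{L} = \mathbf{B}\mathbf{\Sigma}^{-1}\mathbf{U}^\top$ gives $\mathbf{L}\mathbf{X} = \mathbf{B}\mathbf{\Sigma}^{-1}\mathbf{U}^\top\mathbf{U}\mathbf{\Sigma}\mathbf{V}^\top = \mathbf{B}\mathbf{V}^\top = \mathbf{Y}$, and one can check via the same identity $\mathbf{V}^\top\mathbf{V}=\mathbf{I}_r$ that this $\mathbf{L}$ agrees with the minimum norm LS formula applied to $\mathbf{Y}=\mathbf{B}\mathbf{V}^\top$. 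Combined with the first step, this shows that the parametrization $\mathbf{Y} = \mathbf{B}\mathbf{V}^\top$ exactly enumerates every $\mathbf{Y}$ arising as $\mathbf{L}\mathbf{X}$ from a minimum norm least squares solution $\mathbf{L}$.

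The main obstacle here is conceptual rather than computational: making precise that the theorem is really about projecting out the component of $\mathbf{Y}$ orthogonal to the row span of $\mathbf{X}$, since that component contributes nothing to any realizable $\mathbf{L}\mathbf{X}$ but would inflate $\mathbf{Y}$'s norm. The arithmetic is just two applications of the SVD identities, and the only subtlety is noting that $\mathbf{V}\mathbf{V}^\top$ is exactly the orthogonal projector onto the row space of $\mathbf{X}$, so the reduction from $\mathbb{R}^{d\times n}$ to $\mathbb{R}^{d\times r}$ loses no generality whatsoever.
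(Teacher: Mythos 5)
Your proof is correct and rests on the same identities as the paper's: the SVD-based minimum-norm least-squares formula $\mathbf{L} = \mathbf{Y}\mathbf{V}\mathbf{\Sigma}^{-1}\mathbf{U}^\top$ together with $\mathbf{U}^\top\mathbf{U} = \mathbf{V}^\top\mathbf{V} = \mathbf{I}_r$, yielding $\mathbf{L} = \mathbf{B}\mathbf{\Sigma}^{-1}\mathbf{U}^\top$ for $\mathbf{B} = \mathbf{Y}\mathbf{V}$. In fact your write-up is somewhat more complete than the paper's one-line substitution, since you explicitly verify both directions of the ``covers all'' claim (every minimum-norm solution arises from some $\mathbf{B} = \mathbf{Y}\mathbf{V}$, and every $\mathbf{B}\mathbf{V}^\top$ is realizable) and identify $\mathbf{V}\mathbf{V}^\top$ as the projector onto the row space of $\mathbf{X}$, which is the real content of the reduction.
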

\begin{proof}
By substituting $\mathbf{Y = B V^\top}$ into $\mathbf{L = Y V \Sigma^{-1} U^\top}$,
\begin{equation}
\label{eq:bsu}
\mathbf{L = B \Sigma^{-1} U^\top}.
\end{equation}
Since $\mathbf{U}$ and $\mathbf{\Sigma}$ are constants, then $\mathbf{B} \in \mathbb{R}^{d \times r}$ covers all the possible minimum norm least squares solutions of $\mathbf{L}$.
\end{proof}

By substituting $\mathbf{Y = B V^\top}$ into Eq.(\ref{eq:l0}), the sizes of $\mathbf{W}$ and $\mathbf{K}$ can be reduced to $\mathbb{R}^{r \times r}$, which are represented as
\begin{equation}
\label{eq:l1}
 L(\mathcal{T}) = - {\rm tr}( \mathbf{B}^\top \mathbf{B} \mathbf{V}^\top \mathbf{C} \mathbf{T} \mathbf{\Lambda} \mathbf{V})
 + M(\mathbf{\Lambda}).
\end{equation}
This function is in the form of ${\rm tr}(\mathbf{W K})$, where $\mathbf{W = B^\top B}$ and $\mathbf{K} = - \mathbf{V}^\top \mathbf{C} \mathbf{T} \mathbf{\Lambda} \mathbf{V}$.
The size of $\mathbf{W}$ and $\mathbf{K}$ are reduced to $\mathbb{R}^{r \times r}$, and $r \leq {\rm min}(n, D)$.
In addition, $\mathbf{V^\top C T } \in \mathbb{R}^{r \times n}$ is a constant matrix that will not change in the process of optimization.
So this model is with low complexity.

It should be noted that the purpose of SVD here is not for approximation.
If all the ranks of $\mathbf{X}$ are kept, i.e. $r = {\rm rank}(\mathbf{X})$, the solutions are supposed to be exact.
In practice, it is also reasonable to neglect the smallest eigenvalues of $\mathbf{X}$ to speed up the calculation.
In the experiments, an upper bound is set as $ r = {\rm min}( {\rm rank}(\mathbf{X}), 3000) $, since most computers can easily handle a matrix of $3000^2$ size, and the information in most of the datasets can be preserved well.

\subsection{Optimizing on the Stiefel Manifold}
\label{alg:stiefel}

The matrix $\mathbf{W = B^\top B}$ is the low-rank semidefinite matrix to be optimized.
Due to the non-convexity of low-rank semidefinite optimization, directly optimizing $\mathbf{B}$ in the linear space often falls into bad local optimal solutions \cite{mu2016fixed,wen2013feasible}. The mainstream strategy of low-rank semidefinite problem is to achieve the optimization on manifolds.

The Stiefel manifold ${\rm St}(d,r)$ is defined as the set of $r \times d$ column-orthogonal matrices, i.e.,
${\rm St}(d,r) = \{ \mathbf{P} \in \mathbb{R}^{r \times d} :  \mathbf{P^\top P} =  \mathbf{I}_d  \}$.
Any semidefinite $\mathbf{W}$ with ${\rm rank}(\mathbf{W}) = d$ can be represented as $\mathbf{W = P S P^\top} $, where $\mathbf{P} \in {\rm St}(d,r)$ and $\mathbf{S} \in \{ {\rm diag}(\mathbf{s}) : \mathbf{s} \in \mathbb{R}_+^d\}$.
Since $\mathbf{P}$ is already restricted on the Stiefel manifold, we only need to add regularization term for $\mathbf{s}$.
We want to guarantee the existence of dense finite optimal solution of $\mathbf{s}$, so the L2-norm of $\mathbf{s}$ is used as a regularization term.
By adding $\frac{1}{2} ||\mathbf{s}||^2 $ into Eq.(\ref{eq:l1}), we get
\begin{equation}
\label{eq:f0}
f_0(\mathbf{W}) = {\rm tr}(\mathbf{P S P^\top K}) + \frac{1}{2} ||\mathbf{s}||^2 + M(\mathbf{\Lambda}) = \sum_{i=1}^d ( \frac{1}{2}  s_i^2  +  s_i \mathbf{p}_i^\top \mathbf{K} \mathbf{p}_i ) + M(\mathbf{\Lambda}),
\end{equation}
where $\mathbf{p}_i$ is the $i$-th column of $\mathbf{P}$.

Let $ k_i = - \mathbf{p}_i^\top \mathbf{K} \mathbf{p}_i $.
Since $f_0(\mathbf{W})$ is a quadratic function for each $s_i$,
for any value of $\mathbf{P}$, the only corresponding optimal solution of $\mathbf{s} \in \mathbb{R}_+^d$ is
\begin{equation}
\label{eq:si}
\hat{\mathbf{s}} = \{ [ \hat{s}_1, ... , \hat{s}_d ]^\top :  \hat{s}_i = {\rm max}(0, k_i) \}.
\end{equation}
By substituting the $\hat{s}_i$ values into Eq.(\ref{eq:f0}), the existence of $\mathbf{s}$ in $f_0(\mathbf{W})$ can be eliminated, which converts it to a new function $f(\mathbf{P})$ that is only relevant with $\mathbf{P}$, as shown in the following Theorem \ref{th:fp0}.

\begin{theorem}
\label{th:fp0}
\begin{equation}
\label{eq:fp0}
 f(\mathbf{P}) = - \frac{1}{2} \sum_{i=1}^d ({\rm max}(0,k_i) k_i ) + M(\mathbf{\Lambda}).
 \end{equation}
\end{theorem}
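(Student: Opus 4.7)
The plan is to obtain $f(\mathbf{P})$ by direct substitution of the optimal $\hat{s}_i$ from Eq.~(\ref{eq:si}) into $f_0(\mathbf{W})$ in Eq.~(\ref{eq:f0}), after first rewriting the per-index summand in terms of $k_i = -\mathbf{p}_i^\top \mathbf{K} \mathbf{p}_i$. This is essentially a one-variable optimization at each index $i$: for fixed $\mathbf{P}$, the function decomposes as a sum of independent quadratics in the $s_i$, so the substitution can be done term by term and the result is unambiguous.

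First, I would observe that $f_0(\mathbf{W}) = \sum_{i=1}^d\bigl(\tfrac{1}{2} s_i^2 - s_i k_i\bigr) + M(\mathbf{\Lambda})$, so each summand is a univariate quadratic in $s_i$ restricted to $s_i \geq 0$. Next, I would plug in $\hat{s}_i = \max(0,k_i)$ and split into two cases. When $k_i > 0$, we have $\hat{s}_i = k_i$ and the summand becomes $\tfrac{1}{2}k_i^2 - k_i^2 = -\tfrac{1}{2}k_i^2 = -\tfrac{1}{2}\max(0,k_i)\,k_i$. When $k_i \leq 0$, we have $\hat{s}_i = 0$ and the summand vanishes, which also equals $-\tfrac{1}{2}\max(0,k_i)\,k_i$ since $\max(0,k_i)=0$. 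Combining both cases gives the claimed formula.

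There is no real obstacle here; the only subtlety is confirming that the unified expression $-\tfrac{1}{2}\max(0,k_i)\,k_i$ correctly covers both sign cases, and that the additive constant $M(\mathbf{\Lambda})$ (which does not depend on $\mathbf{s}$) carries through unchanged. Since $\hat{s}_i$ is unique for each $i$ by strict convexity of the quadratic on $\mathbb{R}_+$, no further justification about attaining the minimum is needed beyond what Eq.~(\ref{eq:si}) already provides.
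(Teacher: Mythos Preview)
Your proposal is correct and follows essentially the same approach as the paper: substitute $\hat{s}_i=\max(0,k_i)$ into the per-index quadratic of $f_0(\mathbf{W})$ and verify the resulting expression by a case split on the sign of $k_i$. The only cosmetic difference is that the paper first writes the substituted sum as $\sum_{i=1}^d\bigl(\tfrac{1}{2}\max(0,k_i)-k_i\bigr)\max(0,k_i)+M(\mathbf{\Lambda})$ and then checks equality with Eq.~(\ref{eq:fp0}) in each case, whereas you go directly to $-\tfrac{1}{2}\max(0,k_i)\,k_i$ in each case; the content is the same.
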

\begin{proof}
An original form of $f(\mathbf{P})$ can be obtained by substituting Eq.(\ref{eq:si}) into Eq.(\ref{eq:f0}): \\
$\sum_{i=1}^d ((\frac{1}{2}{\rm max}(0,k_i) - k_i){\rm max}(0,k_i)) + M(\mathbf{\Lambda})$.\\
The $f(\mathbf{P})$ in Eq.(\ref{eq:fp0}) is equal to this formula in both $k_i \leq 0$ and $k_i > 0$ conditions.
\end{proof}

\begin{figure}[b]
    \centering
    \includegraphics[width=2.6in]{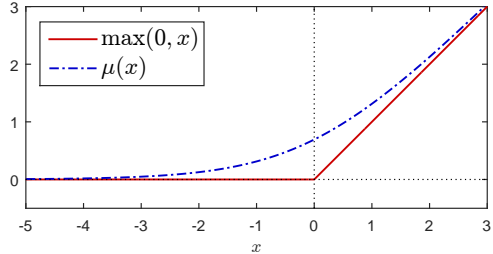}
    \caption{ A sample plot of ${\rm max}(0, x)$ and $\mu(x)$.}
\label{fig:mu}
\end{figure}

In order to make the gradient of $f(\mathbf{P})$ continuous, and to keep $\mathbf{s}$ dense and positive, we adopt function $\mu(x) = -{\rm log}(\sigma(-x))$ as the smoothing of ${\rm max}(0,x)$ \cite{mu2016fixed}, where $\sigma(x) = 1 / (1 +  {\rm exp} (-x)) $ is the sigmoid function.
Function $\mu(x)$ satisfies $ \lim_{x \to +\infty} = x $ and $ \lim_{x \to -\infty} = 0$. The derivative of $\mu(x)$ is ${\rm d}\mu(x)/{\rm d}x = \sigma(x)$.
Figure \ref{fig:mu} displays the sample plot of ${\rm max}(0, x)$ and $\mu(x)$.
Using this smoothed function, the loss function $f(\mathbf{P})$ is redefined as
\begin{equation}
\label{eq:fp}
f(\mathbf{P}) = - \frac{1}{2} \sum_{i=1}^d (\mu(k_i) k_i ) + M(\mathbf{\Lambda}).
\end{equation}

The initialization of $\mathbf{S} \in \{ {\rm diag}(\mathbf{s}) : \mathbf{s} \in \mathbb{R}_+^d\}$ needs to satisfy the condition
\begin{equation}
\label{eq:init}
  \mathbf{s} = \hat{\mathbf{s}}.
\end{equation}
When $\mathbf{P}$ is fixed, $\hat{\mathbf{s}}$ is a linear function of $\mathbf{K}$ (see Eq.(\ref{eq:si})), $\mathbf{K}$ is a linear function of $\mathbf{\Lambda}$ (see Eq.(\ref{eq:l1})), and the 0-1 values in $\mathbf{\Lambda}$ are relevant with $\mathbf{s}$ (see Eq.(\ref{eq:l1})).
So Eq.(\ref{eq:init}) is a nonlinear equation in a form that can be easily solved iteratively by updating $\mathbf{s}$ with $\hat{\mathbf{s}}$.
Since $\hat{\mathbf{s}} \in \mathcal{O}(\mathbf{K}^1)$, $\mathbf{K} \in \mathcal{O}(\mathbf{\Lambda}^1)$, and $\mathbf{\Lambda} \in \mathcal{O}(\mathbf{s}^0)$, this iterative process has a superlinear convergence rate.

\begin{figure*}[t]
\begin{multicols}{2}

\begin{algorithm}[H]
  \scriptsize
  \caption{\label{code:FLRML}FLRML}
  \begin{algorithmic}[1]
    \STATE {\bfseries Input:} the data matrix $\mathbf{X} \in \mathbb{R}^{D \times n}$, the sparse supervision matrix $\mathbf{C} \in \mathbb{R}^{n \times n}$, the low-rank constraint $d$
    \STATE $[\mathbf{U}\in \mathbb{R}^{D \times r}, \mathbf{\Sigma}\in \mathbb{R}^{r \times r}, \mathbf{V}\in \mathbb{R}^{n \times r}] \leftarrow {\rm SVD}(\mathbf{X}) $
    \STATE Calculating constant matrix $\mathbf{V^\top C T}$
    \STATE Randomly initialize $ \mathbf{P} \in {\rm St}(d,r)$
    \STATE Initialize $ \mathbf{S} $ satisfies Eq.(\ref{eq:init})
    \REPEAT
        \STATE Update $ \mathbf{\Lambda} $ and $ \mathbf{K} $ by Eq.(\ref{eq:l1})
        \STATE Update $ \mathbf{G} $ by Eq.(\ref{eq:g})
        \STATE Update $ \mathbf{P} $ and $\mathbf{S}$ by searching on $h(\tau)$ in Eq.(\ref{eq:h})
    \UNTIL convergence
    \STATE {\bfseries Output:} $\mathbf{L \leftarrow \sqrt{S} P^\top \Sigma^{-1} U^\top  }$
  \end{algorithmic}
\end{algorithm}
\begin{algorithm}[H]
  \scriptsize
  \caption{\label{code:MFLRML}M-FLRML}
  \begin{algorithmic}[1]
    \STATE {\bfseries Input:} the data matrices $\mathbf{X}_I \in \mathbb{R}^{D \times n_I}$, the supervision matrices $\mathbf{C}_I \in \mathbb{R}^{n_I \times n_I}$, the low-rank constraint $d$
    \STATE Randomly initialize $\mathbf{L} \in \mathbb{R}^{d \times D}$
    \FOR {$I = 1 : T$}
      \STATE $[\mathbf{U}_I, \mathbf{\Sigma}_I, \mathbf{V}_I] \leftarrow {\rm SVD}(\mathbf{X}_I) $
      \STATE Get $\mathbf{P}_I$ and $\mathbf{s}_I$ by Eq.(\ref{eq:bi})
      \STATE Update $ \mathbf{\Lambda} $ and $ \mathbf{K} $ by Eq.(\ref{eq:l1})
      \STATE Update $\mathbf{P}_I$ and $\mathbf{s}_I$ by Eq.(\ref{eq:pi}) and Eq.(\ref{eq:si})
      \STATE Get $\Delta \mathbf{L}$ by Eq.(\ref{eq:dli})
      \STATE $\mathbf{L} \leftarrow \mathbf{L} + \frac{1}{\sqrt{I}} \Delta \mathbf{L}$
    \ENDFOR
  \STATE {\bfseries Output:} $\mathbf{L}$
  \end{algorithmic}
\end{algorithm}
\end{multicols}
\end{figure*}

To solve the model of this paper, for a matrix $\mathbf{P} \in {\rm St}(d,r)$, we need to get its gradient $\mathbf{G} = \frac{\partial f(\mathbf{P})}{\partial \mathbf{P}}$.

\begin{theorem}
\begin{equation}
 \label{eq:g}
 \mathbf{G} = \frac{\partial f(\mathbf{P})}{\partial \mathbf{P}} = - (\mathbf{K + K^\top}) \mathbf{P} {\rm diag}(\mathbf{q}),
\end{equation}
where
$q_i = -\frac{1}{2} (\mu(k_i) + k_i \sigma(k_i)) $.
\end{theorem}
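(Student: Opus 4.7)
The plan is to differentiate $f(\mathbf{P}) = -\tfrac{1}{2}\sum_{i=1}^{d} \mu(k_i)\,k_i + M(\mathbf{\Lambda})$ column-by-column via the chain rule, treating $\mathbf{K}$ and $\mathbf{\Lambda}$ as held fixed at the current iterate (the standard subgradient-style convention used whenever the indicator $\lambda$ appears through its smoothed surrogate).

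First I would expand the dependence: only $k_i = -\mathbf{p}_i^\top \mathbf{K}\mathbf{p}_i$ ties $f$ to $\mathbf{P}$, so by the chain rule
\[
\frac{\partial f}{\partial \mathbf{P}} \;=\; \sum_{i=1}^{d}\frac{\partial f}{\partial k_i}\,\frac{\partial k_i}{\partial \mathbf{P}}.
\]
Using $\mu'(x)=\sigma(x)$, the scalar factor becomes
\[
\frac{\partial f}{\partial k_i} \;=\; -\tfrac{1}{2}\bigl(\mu'(k_i)\,k_i + \mu(k_i)\bigr) \;=\; -\tfrac{1}{2}\bigl(k_i\sigma(k_i) + \mu(k_i)\bigr) \;=\; q_i,
\]
which matches the claimed definition of $q_i$.

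Next I would compute $\partial k_i/\partial \mathbf{P}$. Since $k_i$ depends only on the $i$-th column, the derivative is a matrix whose only nonzero column (column $i$) equals $\partial k_i/\partial \mathbf{p}_i = -(\mathbf{K}+\mathbf{K}^\top)\mathbf{p}_i$ (the standard quadratic form gradient, not requiring $\mathbf{K}$ to be symmetric). Assembling the contributions, the $i$-th column of $\mathbf{G}$ is $q_i\cdot\bigl(-(\mathbf{K}+\mathbf{K}^\top)\mathbf{p}_i\bigr)$, and collecting all columns into a matrix product yields
\[
\mathbf{G} \;=\; -(\mathbf{K}+\mathbf{K}^\top)\,\mathbf{P}\,\mathrm{diag}(\mathbf{q}),
\]
as claimed.

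The only subtle point (and the main obstacle worth flagging) is the implicit dependence of $M(\mathbf{\Lambda})$ and of $\mathbf{K}=-\mathbf{V}^\top \mathbf{C}\mathbf{T}\mathbf{\Lambda}\mathbf{V}$ on $\mathbf{P}$ through $\mathbf{\Lambda}$, since the diagonal of $\mathbf{\Lambda}$ is built from $\lambda(z_i+m)$ and $z_i$ ultimately depends on $\mathbf{P}$. As is standard for the smoothed hinge/ReLU setup described in the previous subsection, $\mathbf{\Lambda}$ is frozen to its value computed at the current $\mathbf{P}$ (updated once per outer iteration in Algorithm~\ref{code:FLRML}); with this convention $\partial \mathbf{\Lambda}/\partial \mathbf{P}$ is treated as zero and the chain rule yields exactly the expression above. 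No further calculation is needed beyond the two routine derivative rules invoked.
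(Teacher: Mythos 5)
Your proposal is correct and follows exactly the same route as the paper's proof: chain rule via $\partial f/\partial k_i = q_i$ combined with the quadratic-form gradient $\partial k_i/\partial \mathbf{p}_i = -(\mathbf{K}+\mathbf{K}^\top)\mathbf{p}_i$, with $\mathbf{K}$ and $\mathbf{\Lambda}$ held fixed at the current iterate. You merely spell out the column-wise assembly and the frozen-$\mathbf{\Lambda}$ convention that the paper leaves implicit.
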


\begin{proof}
Since $q_i = \frac{\partial f(\mathbf{P})}{\partial k_i} = - \frac{1}{2} (\mu(k_i) + k_i \sigma(k_i))$,
the gradient can be derived from
$ \frac{\partial f(\mathbf{P})}{\partial \mathbf{P}} = \sum_{i=1}^d q_i \frac{\partial k_i}{\partial \mathbf{P}}$. \\
The $\frac{\partial k_i}{\partial \mathbf{P}}$ can be easily obtained since $k_i  = - \mathbf{p}_i^\top \mathbf{K} \mathbf{p}_i$.
\end{proof}

For solving optimizations on manifolds, the commonly used method is the ``projection and retraction'', which first projects the gradient $\mathbf{G}$ onto the tangent space of the manifold as $\mathbf{\hat{G}}$, and then retracts $(\mathbf{P - \hat{G}})$ back to the manifold.
For Stiefel manifold, the projection of $\mathbf{G}$ on the tangent space is $\mathbf{\hat{G}} = \mathbf{G} - \mathbf{P} \mathbf{G}^\top \mathbf{P}$ \cite{wen2013feasible}.
The retraction of the matrix $(\mathbf{P - \hat{G}})$ to the Stiefel manifold can be represented as ${\rm retract}(\mathbf{P - \hat{G}})$, which is obtained by setting all the singular values of $(\mathbf{P - \hat{G}})$ to 1 \cite{absil2012projection}.

For Stiefel manifolds, we adopt a more efficient algorithm \cite{wen2013feasible}, which performs a non-monotonic line search with Barzilai-Borwein step length \cite{zhang2004nonmonotone,barzilai1988two} on a descent curve of the Stiefel manifold.
A descent curve with parameter $\tau$ is defined as
\begin{equation}
\label{eq:h}
  h(\tau) = (\mathbf{I} + \frac{\tau}{2} \mathbf{H})^{\mathbf{-1}}(\mathbf{I} - \frac{\tau}{2} \mathbf{H}) \mathbf{P},
\end{equation}
where $\mathbf{H} = \mathbf{G}~\mathbf{P}^\top - \mathbf{P}~\mathbf{G}^\top$.
The optimization is performed by searching the optimal $\tau$ along the descent curve.
The Barzilai-Borwein method predicts a step length according to the step lengths in previous iterations, which makes the method converges faster than the ``projection and retraction''.

The outline of the FLRML algorithm is shown in \textbf{Algorithm \ref{code:FLRML}}.
It can be mainly divided into four stages: SVD preprocessing (line 2), constant initializing (line 3), variable initializing (lines 4 and 5), and the iterative optimization (lines 6 to 11).
In one iteration, the complexity of each step is:
\textbf{(a)} updating $\mathbf{Y}$ and $\mathbf{\Lambda}$: $\mathcal{O}(nrd)$;
\textbf{(b)} updating $\mathbf{K}$: $\mathcal{O}(n r^2)$;
\textbf{(c)} updating $\mathbf{G}$ : $\mathcal{O}(r^2 d)$;
\textbf{(d)} optimizing $\mathbf{P}$ and $\mathbf{S}$: $\mathcal{O}(r d^2)$.

\subsection{Mini-batch FLRML}
\label{alg:mini}

In FLRML, the maximum size of constant matrices in the iterations is only $\mathbb{R}^{r \times n}$ ($\mathbf{V} $ and $\mathbf{V^\top C T}$), and the maximum size of variable matrices is only $\mathbb{R}^{r \times r}$.
Smaller matrix size theoretically means the ability to process larger datasets on the same size of memory.
However, in practice, we find that the bottleneck is not the optimization process of FLRML.
On large-scale and high-dimensional datasets, SVD preprocessing may take more time and memory than the FLRML optimization process. And for very large datasets, it will be difficult to load all data into memory.
In order to break the bottleneck, and make our method scalable to larger numbers of high-dimensional data in limited memory, we further propose Mini-batch FLRML (M-FLRML).

Inspired by the stochastic gradient descent method \cite{zhang2017efficient,qian2015efficient}, M-FLRML calculates a descent direction from each mini-batch of data, and updates $\mathbf{L}$ at a decreasing ratio.
For the $I$-th mini-batch, we randomly select $N_t$ triplets from the triplet set, and use the union of the samples to form a mini-batch with $n_I$ samples.
Considering that the Stiefel manifold ${\rm St}(d,r)$ requires $r \geq d$, if the number of samples in the union of triplets is less than $d$, we randomly add some other samples to make $n_I > d$.
The matrix $\mathbf{X}_I \in \mathbb{R}^{D \times n_I}$ is composed of the extracted columns from $\mathbf{X}$, and $\mathbf{C}_I \in \mathbb{R}^{n_I \times n_I}$ is composed of the corresponding columns and rows in $\mathbf{C}$.

The objective $f_0(\mathbf{W})$ in Eq.(\ref{eq:f0}) consists of small matrices with size $\mathbb{R}^{r \times r}$ and $\mathbb{R}^{r \times d}$.
Our idea is to first find the descent direction for small matrices, and then maps it back to get the descent direction of large matrix $\mathbf{L} \in \mathbb{R}^{d \times D}$.
Matrix $\mathbf{X}_I$ can be decomposed as $\mathbf{X}_I = \mathbf{U}_I \mathbf{\Sigma}_I \mathbf{V}_I^\top $, and the complexity of decomposition is significantly reduced from $\mathcal{O}(Dnr)$ to $\mathcal{O}(D n_I^2)$ on this mini-batch.
According to Eq.(\ref{eq:bsu}), a matrix $\mathbf{B}_I$ can be represented as $\mathbf{B}_I = \mathbf{L} \mathbf{U}_I \mathbf{\Sigma}_I$.
Using SVD, matrix $\mathbf{B}_I$ can be decomposed as $\mathbf{B}_I = \mathbf{Q}_I {\rm diag}(\sqrt{\mathbf{s}_I}) \mathbf{P}_I^\top $,
and then the variable $\mathbf{W}$ for objective $f_0(\mathbf{W})$ can be represented as
\begin{equation}
\label{eq:bi}
\mathbf{W} = \mathbf{B}_I^\top \mathbf{B}_I = \mathbf{P}_I {\rm diag}(\mathbf{s}_I) \mathbf{P}_I^\top.
\end{equation}

In FLRML, in order to convert $f_0(\mathbf{W})$ into $f(\mathbf{P})$, the initial value of $\mathbf{s}$ satisfies the condition $\mathbf{s} = \hat{\mathbf{s}}$.
But in M-FLRML, $\mathbf{s}_I$ is generated from $\mathbf{B}_I$, so generally this condition is not satisfied.
So instead, we take $\mathbf{P}_I$ and $\mathbf{s}_I$ as two variables, and find the descent direction of them separately.
In Mini-batch FLRML, when a different mini-batch is taken in next iteration, the predicted Barzilai-Borwein step length tends to be improper, so we use ``projection and retraction'' instead. The updated matrix $\hat{\mathbf{P}}_I$ is obtained as
\begin{equation}
\label{eq:pi}
  \hat{\mathbf{P}}_I = {\rm retract}(\mathbf{P}_I - \mathbf{G}_I + \mathbf{P}_I \mathbf{G}_I^\top \mathbf{P}_I).
\end{equation}
For $\mathbf{s}_I$, we use Eq.(\ref{eq:si}) to get an updated vector $\hat{\mathbf{s}}_I$.
Then the updated matrix for $\mathbf{B}_I$ can be obtained as $\hat{\mathbf{B}}_I = \mathbf{Q}_I {\rm diag}(\sqrt{\hat{\mathbf{s}}_I}) \hat{\mathbf{P}}_I^\top$.
By mapping $\hat{\mathbf{B}}_I$ back to the high-dimensional space, the descent direction of $\mathbf{L}$ can be obtained as
\begin{equation}
\label{eq:dli}
  \Delta \mathbf{L} = \mathbf{Q}_I {\rm diag}(\sqrt{\hat{\mathbf{s}}_I}) \hat{\mathbf{P}}_I^\top \mathbf{\Sigma}_I^{\mathbf{-1}} \mathbf{U}_I^\top - \mathbf{L}.
\end{equation}
For the $I$-th mini-batch, $\mathbf{L}$ is updated at a decreasing ratio as
$\mathbf{L} \leftarrow \mathbf{L} + \frac{1}{\sqrt{I}}\Delta \mathbf{L}$.
The theoretical analysis of the stochastic strategy which updates in step sizes by $\frac{1}{\sqrt{I}}$ can refer to the reference \cite{zhang2017efficient}.
The outline of M-FLRML is shown in \textbf{Algorithm \ref{code:MFLRML}}.

\section{Experiments}
\label{exp}

\subsection{Experiment Setup}
\label{exp:setup}

In the experiments, our \textbf{FLRML} and \textbf{M-FLRML} are compared with 5 state-of-the-art low-rank metric learning methods, including
\textbf{LRSML} \cite{liu2015low},
\textbf{FRML} \cite{mu2016fixed},
\textbf{LMNN} \cite{weinberger2009lmnn},
\textbf{SGDINC} \cite{zhang2017efficient},
and \textbf{DRML} \cite{Harandi2017Joint}.
For these methods, the complexities, maximum variable size and maximum constant size in one iteration are compared in Table \ref{tab:cplx}.
Considering that $ d \ll D $ and $n_I \ll n$, the relatively small items in the table are omitted.

In addition, 
four state-of-the-art metric learning methods of other types are also compared, including
one sparse method (\textbf{SCML} \cite{Shi2014Sparse}),
one online method (\textbf{OASIS} \cite{chechik2009online}),
and
two non-iterative methods (\textbf{KISSME} \cite{koestinger2012kissme}, \textbf{RMML} \cite{zhu2018towards}).


The methods are evaluated on eight datasets with high dimensions or large numbers of samples: 
three datasets \textbf{NG20}, \textbf{RCV1-4} and \textbf{TDT2-30} derived from three text collections respectively \cite{lang1995newsweeder,cai2012manifold}; 
one handwritten characters dataset \textbf{MNIST} \cite{lecun1998mnist};
four voxel datasets of 3D models \textbf{M10-16}, \textbf{M10-100}, \textbf{M40-16}, and \textbf{M40-100} with different resolutions in $16^3$ and $100^3$ dimensions, respectively, generated from ``ModelNet10'' and ``ModelNet40'' \cite{wu20153dshapenets} which are widely used in 3D shape understanding \cite{Zhizhong2018VIP,parts4features19,3DViewGraph19,p2seq18,MAPVAE19,3D2SeqViews19,Zhizhong2018seq,l2g2019,Zhizhong2019seq,Han2017,HanCyber17a,HanTIP18}.
To measure the similarity, the data vectors are normalized to the unit length.
The dimensions $D$, the number of training samples $n$, the number of test samples $n_{test}$, and the number of categories $n_{cat}$ of all the datasets are listed in Table \ref{tab:data}.

Different methods have different requirements for SVD preprocessing.
In our experiments, a fast SVD algorithm \cite{li2017fastsvd} is adopted. The time $t_{svd}$ in SVD preprocessing is listed at the top of Table \ref{tab:res:svd}.
Using the same decomposed matrices as input, seven methods are compared:
three methods (LRSML, LMNN, and our FLRML) require SVD preprocessing;
four methods (FRML, KISSME, RMML, OASIS) do not mention SVD preprocessing, but since they need to optimize large dense $\mathbb{R}^{D \times D}$ matrices, SVD has to be performed to prevent them from out-of-memory error on high-dimensional datasets.
For all these methods, the rank for SVD is set as $ r = {\rm min}( {\rm rank}(\mathbf{X}), 3000) $.
The rest four methods (SCML, DRML, SGDINC, and our M-FLRML) claim that there is no need for SVD preprocessing, which are compared using the original data matrices as input.
Specifically, since the SVD calculation for datasets M10-100 and M40-100 has exceeded the memory limit of common PCs, only these four methods are tested on these two datasets.

Most tested methods use either pairwise or triplet constraints, except for LMNN and FRML that requires directly inputting the labels in the implemented codes.
For the other methods, 5 triplets are randomly generated for each sample, which is also used as 5 positive pairs and 5 negative pairs for the methods using pairwise constraints.
The accuracy is evaluated by a 5-NN classifier using the output metric of each method.
For each low-rank metric learning method, the rank constraint for 
$\mathbf{M}$ is set as $d = 100$.
All the experiments are performed on the Matlab R2015a platform on a PC with 3.60GHz processor and 16GB of physical memory.
The code is available at \href{https://github.com/highan911/FLRML}{https://github.com/highan911/FLRML}.

\begin{figure*}[t]
 \begin{minipage}[t]{0.49\textwidth}
  \scriptsize
  \centering
  \makeatletter\def\@captype{table}\makeatother\caption{\label{tab:data} The datasets used in the experiments.}
    \begin{tabular}{l|r|r|r|r}
      \hline
      dataset & $D$ & $n$ & $ n_{test} $ & $n_{cat}$ \\
      \hline
      NG20 & 62,061 & 15,935 & 3,993 & 20 \\
      RCV1 & 29,992 & 4,813 & 4,812 & 4 \\
      TDT2 & 36,771 & 4,697 & 4,697 & 30 \\
      MNIST & 780 & 60,000 & 10,000 & 10 \\
      M10-16 & 4,096 & 47,892 & 10,896 & 10 \\
      M40-16 & 4,096 & 118,116 & 29,616 & 40 \\
      M10-100 & 1,000,000 & 47,892 & 10,896 & 10 \\
      M40-100 & 1,000,000 & 118,116 & 29,616 & 40 \\
      \hline
    \end{tabular}%
  \end{minipage}
  \begin{minipage}[t]{0.49\textwidth}
   \scriptsize
  \centering
  \makeatletter\def\@captype{table}\makeatother\caption{\label{tab:cplx} The complexity, variable matrix size and constant matrix size of 7 low-rank metric learning methods (in one iteration).}
    \begin{tabular}{l|r|r|r}
      \hline
      methods & complexity & $ {\rm size}({\rm var}) $& ${\rm size}({\rm const})$ \\
      \hline
      LMNN & $|\mathcal{T}| r^2 + r^3 $ & $ |\mathcal{T}| r + r^2 $& $ n r $\\
      LRSML & $ n^2 d$ & $ n^2 $& $ n^2 $ \\
      FRML & $D n^2 + D^2 d$ & $ D^2 $ & $D n$ \\
      DRML & $ D^2 |\mathcal{T}| + D^2 d $ & $D d$ & $D |\mathcal{T}|$ \\
      SGDINC & $D d^2 + D n_I^2$  & $D d$ & $D n_I$\\
      FLRML & $nrd + n r^2 $ & $r^2 + n d$ & $n r$ \\
      M-FLRML & $ D n_I^2 + D n_I d$ & $D d$ & $D n_I$ \\
      \hline
    \end{tabular}%
   \end{minipage}
   \begin{minipage}[t]{\textwidth}
   \scriptsize
  \centering
  \makeatletter\def\@captype{table}\makeatother\caption{\label{tab:res:svd} The classification accuracy (left) and training time (right, in seconds) of 7 metric learning methods with SVD preprocessing.}
    \begin{tabular}{l|lr|lr|lr|lr|lr|lr}
    \hline
    datasets
    & \multicolumn{2}{c|}{NG20} & \multicolumn{2}{c|}{RCV1}
    & \multicolumn{2}{c|}{TDT2} & \multicolumn{2}{c|}{MNIST}
    & \multicolumn{2}{c|}{M10-16} & \multicolumn{2}{c}{M40-16} \\
    \hline
    $t_{svd}$
    & \multicolumn{2}{r|}{191} & \multicolumn{2}{r|}{91}
    & \multicolumn{2}{r|}{107} & \multicolumn{2}{r|}{10}
    & \multicolumn{2}{r|}{355} & \multicolumn{2}{r}{814} \\
    \hline
    \hline
OASIS & 24.3\%  & 405 & 84.6\%  & 368 & 89.4\%  & 306 & 97.7\%  & 21  & 83.3\%  & 492 & 67.9\%  & 380 \\
KISSME  & 67.8\%  & 627 & 92.1\%  & 224 & 95.9\%  & 220 & 95.7\%  & 74  & 76.2\%  & 1750  & 40.8\%  & 7900  \\
RMML  & 43.7\%  & 342 & 66.5\%  & 476 & 87.5\%  & 509 & 97.6\%  & 18  & 82.1\%  & 366 & \multicolumn{2}{|c}{M}  \\
LMNN  & 62.2\%  & 1680  & 88.4\%  & 261 & \textbf{97.6\%} & 1850  & \textbf{97.8\%} & 3634  & \multicolumn{2}{|c}{M}  & \multicolumn{2}{|c}{M}  \\
LRSML & 46.9\%  & 90  & 93.7\%  & 50  & 85.3\%  & 1093  & \multicolumn{2}{|c}{M}  & \multicolumn{2}{|c}{M}  & \multicolumn{2}{|c}{M}  \\
FRML  & 75.7\%  & 227 & \textbf{94.2\%} & 391 & 97.0\%  & 371 & 90.5\%  & 48  & 76.1\%  & 172 & 69.1\%  & 931 \\
  \hline
\textbf{FLRML}  & \textbf{80.2\%} & \textbf{37} & 93.5\%  & \textbf{14} & 96.3\%  & \textbf{10} & 95.9\%  & \textbf{7}  & \textbf{83.4\%} & \textbf{41} & \textbf{75.0\%} & \textbf{101} \\
    \hline
    \end{tabular}%
   \end{minipage}
  \begin{minipage}[t]{\textwidth}
   \scriptsize
  \centering
  \setlength{\tabcolsep}{3pt}
  \makeatletter\def\@captype{table}\makeatother\caption{\label{tab:res:full} The classification accuracy (left) and training time (right, in seconds) of 4 metric learning methods without SVD preprocessing.}
    \begin{tabular}{l|lr|lr|lr|lr|lr|lr|lr|lr}
    \hline
    datasets
    & \multicolumn{2}{c|}{NG20} & \multicolumn{2}{c|}{RCV1}
    & \multicolumn{2}{c|}{TDT2} & \multicolumn{2}{c|}{MNIST}
    & \multicolumn{2}{c|}{M10-16} & \multicolumn{2}{c|}{M40-16}
    & \multicolumn{2}{c|}{M10-100} & \multicolumn{2}{c}{M40-100} \\
    \hline
    \hline
SCML  & \multicolumn{2}{|c|}{M} & 93.9\%  & 8508  & \textbf{96.9\%} & 211 & 91.3\%  & 310 &   \multicolumn{2}{|c}{E}  &   \multicolumn{2}{|c}{E}  & \multicolumn{2}{|c}{M}  &     \multicolumn{2}{|c}{M}  \\
DRML  & 25.1\%  & 2600  & 91.4\%  & 216 & 82.2\%  & 750 & 82.1\%  & 1109  & 72.5\%  & 6326  &   \multicolumn{2}{|c}{M}  &   \multicolumn{2}{|c}{M}  &   \multicolumn{2}{|c}{M}  \\
SGDINC  & 54.0\%  & 3399  & \textbf{94.2\%} & 1367  & \textbf{96.9\%} & 1121  & \textbf{97.6\%} & 44  & \textbf{83.5\%} & 174 & \textbf{74.2\%} & 163 & 56.9\%  & 4308  & 35.5\%  & 5705  \\
  \hline
\textbf{M-FLRML}  & \textbf{54.2\%} & \textbf{26} & 92.7\%  & \textbf{11} & 95.0\%  & \textbf{14} & 96.1\%  & \textbf{1}  & 83.0\%  & \textbf{2}  & 74.0\%  & \textbf{2}  & \textbf{82.7\%} & \textbf{637}  & \textbf{73.8\%} & \textbf{654}  \\

    \hline
    \end{tabular}%
   \end{minipage}
\end{figure*}

\subsection{Experimental Results}
\label{exp:result}

Table \ref{tab:res:svd} and Table \ref{tab:res:full} list the classification accuracy (left) and training time (right, in seconds) of all the compared metric learning methods in all the datasets.
The symbol
``E'' indicates that the objective fails to converge to a finite non-zero solution, and ``M'' indicates that its computation was aborted due to out-of-memory error.
The maximum accuracy and minimum time usage for each dataset are boldly emphasized.

Comparing the results with the analysis of complexity in Table \ref{tab:cplx}, we find that 
%
for many tested methods, if the complexity or matrix is a polynomial of $D$, $n$ or $|\mathcal{T}|$, the efficiency on datasets with large numbers of samples is still limited.
As shown in Table \ref{tab:res:svd} and Table \ref{tab:res:full}, FLRML and M-FLRML are faster than the state-of-the-art methods on all datasets. Our methods can achieve comparable accuracy with the state-of-the-art methods on all datasets, and obtain the highest accuracy on several datasets with both high dimensions and large numbers of samples.

%
Both our M-FLRML and SGDINC use mini-batches to improve efficiency. The theoretical complexity of these two methods is close, but in the experiment M-FLRML is faster.
Generally, M-FLRML is less accurate than FLRML, but it significantly reduces the time and memory usage on large datasets.
In the experiments, the largest dataset ``M40-100'' is with size $1,000,000 \times 118,116$. If there is a dense matrix of such size, it will take up 880 GB of memory.
When using M-FLRML to process this data set, the recorded maximum memory usage of Matlab is only 6.20 GB (Matlab takes up 0.95 GB of memory on startup).
The experiment shows that M-FLRML is suitable for metric learning of large-scale high-dimensional data on devices with limited memory.

\begin{figure*}[t]
\begin{minipage}[t]{0.49\linewidth}
  \centering
    \includegraphics[width=2.6in]{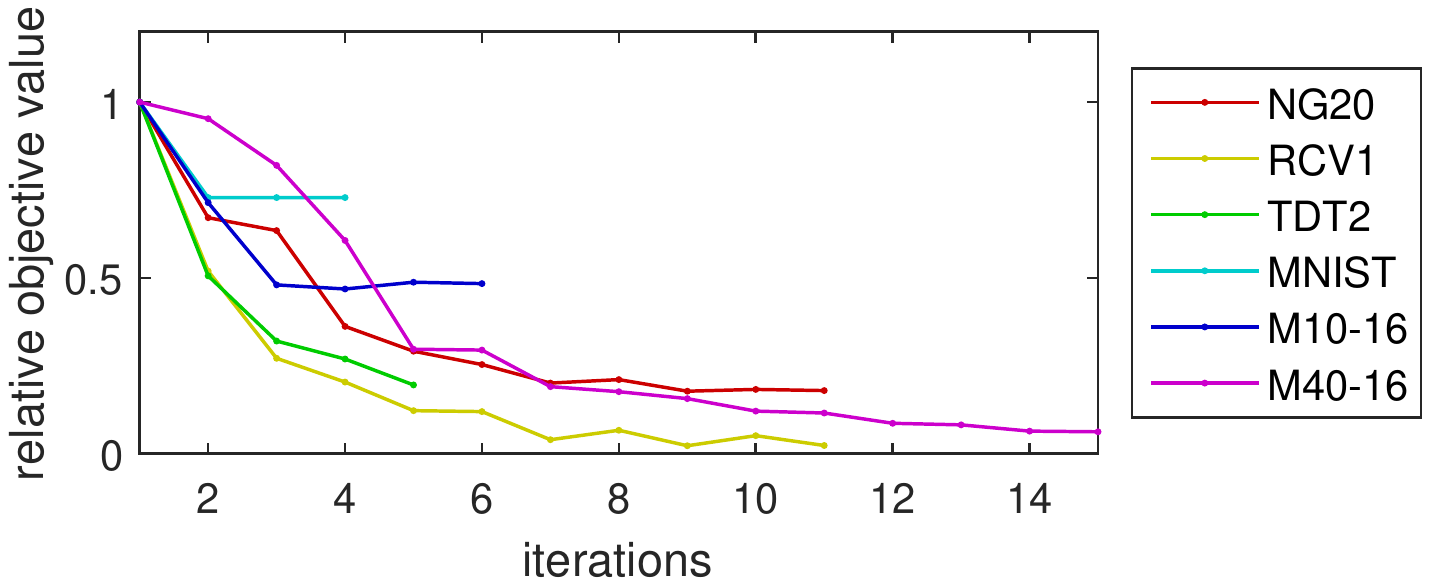}
    \caption{ The convergence behavior of FLRML in optimization on 6 datasets.}
\label{fig:conv}
\end{minipage}%
\hfill
\begin{minipage}[t]{0.49\linewidth}
    \centering
    \includegraphics[width=2.6in]{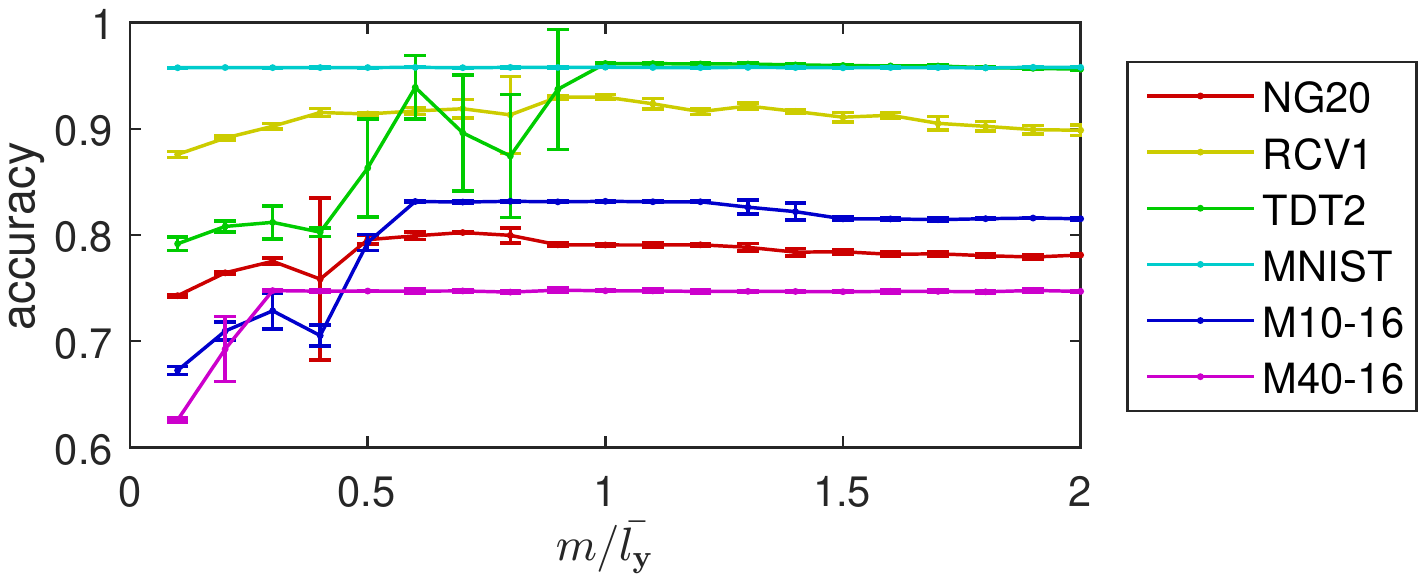}
    \caption{ The change in accuracy of FLRML with different $m / \bar{l_{\mathbf{y}}}$ values on 6 datasets.}
\label{fig:margin}
\end{minipage}
\begin{minipage}[t]{\linewidth}
    \centering
    \includegraphics[width=3.2in]{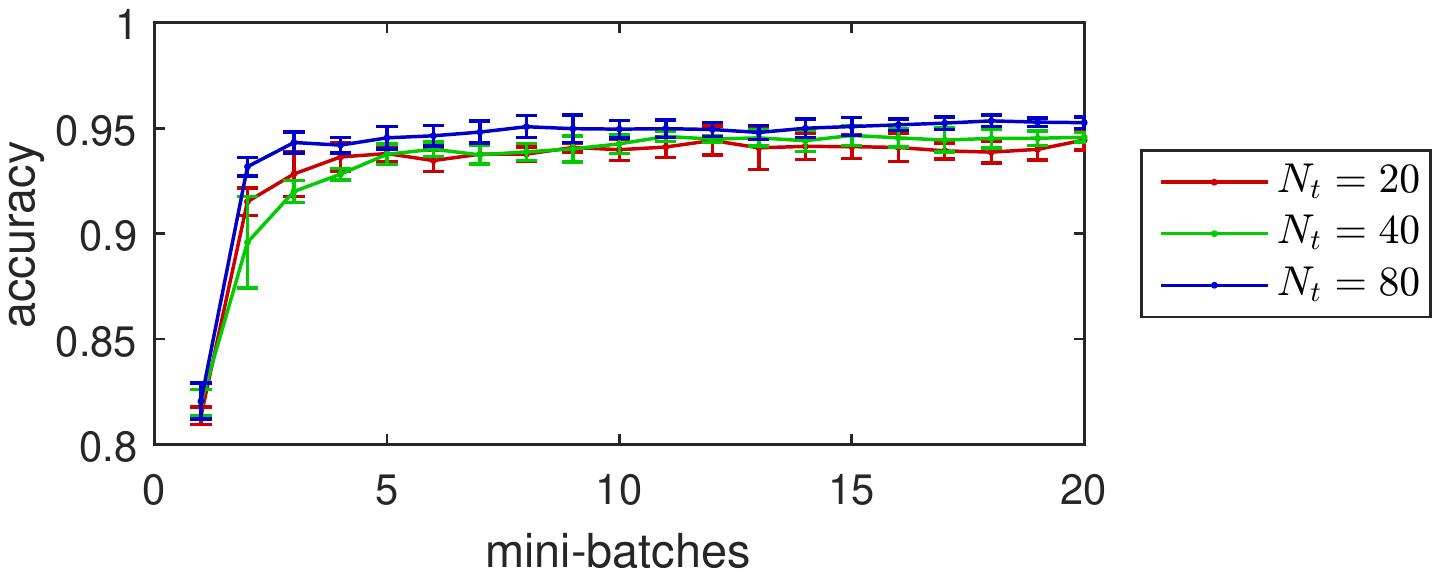}
    \caption{The change in accuracy of M-FLRML on ``TDT2'' with different $N_t$ values and number of mini-batches.}
\label{fig:minibatch}
\end{minipage}
\end{figure*}





In the experiments, we find the initialization of $\mathbf{s}$ usually converges within 3 iterations.
The optimization on the Stiefel manifold usually converges in less than 15 iterations.
Figure \ref{fig:conv} shows the samples of convergence behavior of FLRML in optimization on each dataset.
The plots are drawn in relative values, in which the values of first iteration are scaled to 1.

In FLRML, one parameter $m$ is about the margin in the margin loss.
An experiment is performed to study the effect of the margin parameter $m$ on accuracy.
Let $\bar{l_{\mathbf{y}}}$ be the mean of $\mathbf{y}_i^\top \mathbf{y}_i$ values, i.e.
$\bar{l_{\mathbf{y}}} = \frac{1}{n}\sum_{i=1}^n (\mathbf{y}_i^\top \mathbf{y}_i)$.
We test the change in accuracy of FLRML when the ratio $m / \bar{l_{\mathbf{y}}}$ varies between 0.1 and 2.
The mean values and standard deviations of 5 repeated runs are plotted in Figure \ref{fig:margin}, which shows that FLRML works well on most datasets when $ m / \bar{l_{\mathbf{y}}} $ is around 1.
So we use $ m / \bar{l_{\mathbf{y}}} = 1 $ in the experiments in Table \ref{tab:res:svd} and Table \ref{tab:res:full}.

In M-FLRML, another parameter is the number of triplets $N_t$ used to generate a mini-batch.
We test the effect of $N_t$ on the accuracy of M-FLRML with the increasing number of mini-batches.
The mean values and standard deviations of 5 repeated runs are plotted in Figure \ref{fig:minibatch}, which shows that a larger $N_t$ makes the accuracy increase faster, and usually M-FLRML is able to get good results within 20 mini-batches. So in Table \ref{tab:res:full}, all the results are obtained with $N_t = 80$ and $T = 20$.


\section{Conclusion and Future Work}
\label{con}

In this paper, FLRML and M-FLRML are proposed for efficient low-rank similarity metric learning on high-dimensional datasets with large numbers of samples.
%
%
With a novel formulation, FLRML and M-FLRML can better employ low-rank constraints to further reduce the complexity and matrix size, based on which optimization is efficiently conducted on Stiefel manifold. This enables FLRML and M-FLRML to achieve good accuracy and faster speed on large numbers of high-dimensional data.
%
%
%
%
%
%
%
%
%
One limitation of our current implementation of FLRML and M-FLRML is that the algorithm still runs on a single processor.
Recently, there is a trend about distributed metric learning for big data \cite{Su2016Distributed,Xing2015Petuum}.
It is an interest of our future research to implement M-FLRML on distributed architecture for scaling to larger datasets.

\subsubsection*{Acknowledgments}

This research is sponsored in part by 
the National Key R\&D Program of China (No. 2018YFB0505400, 2016QY07X1402),
the National Science and Technology Major Project of China (No. 2016ZX 01038101), 
and the NSFC Program (No. 61527812).

\bibliography{ref}
\bibliographystyle{elsarticle-num}

\end{document}